\newcommand{\N}{\mathbb{N}}
\newcommand{\R}{\mathbb{R}}
\newcommand{\Prob}{\mathbb{P}}
\newcommand{\hquad}{\hspace{0.5em}} 
\newtheorem{theorem}{Theorem}[]
\newtheorem{definition}[theorem]{Definition}
\newtheorem{lemma}[theorem]{Lemma}
\newtheorem{proposition}[theorem]{Proposition}
\newtheorem{remark}[]{Remark}
\begin{document}
\title{Non-Gaussian Chance-Constrained Trajectory Planning for Autonomous Vehicles under Agent Uncertainty}

\author{Allen Wang, Ashkan Jasour, and Brian C. Williams
\thanks{Manuscript received: February, 24, 2020; Revised June, 2, 2020; Accepted July, 2, 2020.} 
\thanks{This paper was recommended for publication by Editor Nancy Amato upon evaluation of the Associate Editor and Reviewers' comments.
This work was supported in part by the Boeing grant 6943358 and Masdar Institute grant 6938857. Allen Wang was supported in part by an NSF Graduate Research Fellowship. Embotech generously supplied academic FORCES Pro licenses.} 
\thanks{All authors are with the Computer Science and Artificial Intelligence Laboratory, Massachusetts Institute of Technology, Cambridge, MA
01239, USA \tt\footnotesize\{allenw, jasour, williams\}@mit.edu}%
\thanks{Digital Object Identifier (DOI): see top of this page.}
}

%

\markboth{IEEE Robotics and Automation Letters. Preprint Version. Accepted July, 2020}
{Wang \MakeLowercase{\textit{et al.}}: Non-Gaussian Chance-Constrained Trajectory Planning} 

\maketitle

\begin{abstract}
Agent behavior is arguably the greatest source of uncertainty in trajectory planning for autonomous vehicles. This problem has motivated significant amounts of work in the behavior prediction community on learning rich distributions of the future states and actions of agents. However, most current works on chance-constrained trajectory planning under agent or obstacle uncertainty either assume Gaussian uncertainty or linear constraints, which is limiting, or requires sampling, which can be computationally intractable to encode in an optimization problem. In this paper, we extend the state-of-the-art by presenting a methodology to upper-bound chance-constraints defined by polynomials and mixture models with potentially non-Gaussian components. Our method achieves its generality by using statistical moments of the distributions in concentration inequalities to upper-bound the probability of constraint violation. With this method, optimization-based trajectory planners can plan trajectories that are chance-constrained with respect to a wide range of distributions representing predictions of agent future positions. In experiments, we show that the resulting optimization problem can be solved with state-of-the-art nonlinear program solvers to plan trajectories fast enough for use online.
\end{abstract}

\begin{IEEEkeywords}
Probability and Statistical Methods, Motion and Path Planning, Optimization and Optimal Control, Robot Safety, Intelligent Transportation Systems
\end{IEEEkeywords}

\section{INTRODUCTION}
\IEEEPARstart{I}{n} order for autonomous vehicles to drive safely on public roads, they need to plan trajectories that take into account predictions of future positions of other agents (e.g. human driven vehicles, pedestrians, cyclists). However, predictions are inherently uncertain, especially predictions of human behavior. This fact is motivating significant amounts of work in the behavior prediction community to develop methods that predict \textit{distributions} of future agent states and actions, often using a deep neural network (DNN). For example, \cite{lee2017desire} trains a conditional variational autoencoder to generate samples of possible future trajectories; the DNN in this case essentially becomes the distribution from which samples can be drawn. \cite{chai2019multipath, deo2018multi, hong2019rules} learn Gaussian mixture models (GMMs) for the agents' future positions to handle both uncertainty in high level decisions, which tends to be multi-modal, and uncertainty in execution, which tends to be continuous.

While work in behavior prediction can now generate rich distributions of future agent states and actions, most current works in chance-constrained trajectory planning only address the unimodal case and either make Gaussian assumptions or require sampling \cite{schwarting2017parallel,calafiore2006scenario,blackmore2010probabilistic, blackmore2009convex,luders2010chance,cannon2017chance}. We argue that handling non-Gaussian distributions is important because, for example, almost any distribution for agent action propagated through nonlinear dynamics models will result in non-Gaussian position distributions. Gaussians also have unbounded supports; this is unrealistic as the reachable set of agents is bounded by physical laws in reality. To handle non-Gaussian uncertainty, some prior works take a sampling-based approach. This often makes optimization computationally intractable because thousands of constraints need to be introduced to even enforce chance-constraints on the order of $10^{-2}$ for any practical problem \cite{calafiore2006scenario, cannon2017chance,norden2019efficient}. To address non-Gaussian uncertainty in obstacle or agent positions without sampling, recent works apply the Cantelli or Chebyshev inequality, but they are currently restricted to linear constraints \cite{summers2018distributionally,renganathan2020towards, mesbah2016stochastic,paulson2020stochastic}. For non-Gaussian uncertainty, sums-of-squares programming has been applied to the problem of trajectory tracking for nonlinear systems and risk assessment in the presence of non-convex obstacles, but current computational limitations restrict it to applications amenable to leveraging offline computation \cite{jasour2019risk,jasour2019sequential,wang2020RSS,jasour2018moment,steinhardt2012finite}.

\textit{Statement of Contributions:} In this paper, we present a general chance-constrained trajectory planning formulation for autonomous vehicles that can handle mixtures of non-Gaussian distributions of agent position and, unlike many prior works which make point mass assumptions, accounts for the sizes of the ego vehicle and agents. This is enabled by a general methodology we develop for enforcing non-Gaussian polynomial chance-constraints using concentration inequalities, extending the prior art which can only handle linear constraints. This methodology makes heavy use of symbolic algebra, and we develop and provide a Python package, \text{AlgebraicMoments}\footnote{The source code with examples can be found at \url{github.com/allen-adastra/algebraic_moments}.}, to implement it. Given a constraint defined as a polynomial in a random vector, AlgebraicMoments can generate a closed form expression that upper-bounds the probability of the event in terms of statistical moments of the random vector. It can even directly generate MATLAB or Python code to compute the risk bound, given the necessary inputs. Since this approach only depends on statistical moments of the distributions, it can apply to a wide range of prediction distributions including non-Gaussian mixture models of future agent positions. In numerical experiments, we show how our formulation, when solved with advanced interior-point methods, can be used to plan trajectories with horizons of $5$ seconds fast enough for use online. While this paper focuses on the problem of trajectory planning for autonomous vehicles under agent uncertainty, the general methodology should be applicable to motion and trajectory planning problems with constraint uncertainty in many other application domains.
\section{Representation of Agent Predictions}
\subsection{Assumptions}
We assume a behavior prediction system provides the distribution of future positions for an agent over a $T$ step horizon, $\mathbf{g}_{1:T}$, in a fixed frame. The distributions can be either unimodal or a mixture of non-Gaussian random vectors. The distributions $\mathbf{g}_t = [g_{x_t}, g_{y_t}]^T$ are assumed to be either independent across time in the unimodal case or independent across time conditioned on the discrete mode in the mixture model case. This is a common assumption used in state-of-the-art behavior prediction systems; thus, it does not significantly restrict our method's range of applicability \cite{chai2019multipath,deo2018multi,rhinehart2018r2p2}. We also make the additional assumption that the predicted distributions do not change w.r.t. changes in the ego vehicle trajectory, as accounting for the change requires having the behavior prediction system in the planning loop. In practice, it may be more effective to alternate between the planner and prediction systems by using planned trajectories in the prediction system to generate a new distribution, but we do not explore this interaction in this paper.

\subsection{Computing Moments of Distributions}
In some cases, the statistical moments of distributions are known in closed form. In other cases, statistical moments of a distribution can be rapidly computed by applying automatic or numerical differentiation to its characteristic function (CF). This is a very general approach to computing moments of distributions as CFs always exist, and, from a more practical standpoint, there are extensive tables of CFs for common distributions \cite{witkovsky2017brief}. Letting $X$ denote a random variable and $\Phi_X(t)$ denote its characteristic function, the $n_{th}$ moment of $X$ can be computed by:
\begin{align}
    \mathbb{E}[X^n] = i^{-n}\left[\frac{d^n}{dt^n}\Phi_X(t)\right]_{t=0}
\end{align}
Similarly, moments of a random vector, $\mathbf{w}$, can be computed via partial differentiation of its joint characteristic function $\Phi_\mathbf{w}(t)$, although we note catalogues of CFs are less extensive for multivariate distributions. Alternatively, moments may also be estimated with Monte Carlo type methods; this is useful for approaches where the DNN is a distribution from which sampled trajectories are drawn.

\subsection{Statistics of Mixture Models}\label{subsec:mm_statistics}
A mixture model is a rich way of expressing multi-modal uncertainty by combining multiple continuous distributions \cite{fruhwirth2006finite}. In this work, we work with random vectors and define random vector mixture models as:
\begin{definition}\normalfont
An $n$ component random vector mixture model $\mathbf{w}$ is a random vector with components $\mathbf{w}_i$ and mixture weights $w_i$ for $i=1,...,n$ s.t. $\sum_{i=1}^n w_i=1$. Its pdf $f_{\mathbf{w}}$ is related to those of its components $f_{\mathbf{w}_i}$ by:
\begin{align}
    f_\mathbf{w}(\cdot) = \sum_{i=1}^nw_if_{\mathbf{w}_i}(\cdot)
\end{align}
\end{definition}
A useful property of mixture models is its statistics can be computed in terms of statistics of its components \cite{fruhwirth2006finite}. Proposition \ref{prop:decompose_exp_g} states this fact in the general case and only makes the mild assumption that $g(\cdot)$ is measurable, allowing it to apply to most functions used in practice. As a simple corollary, by letting $g(\mathbf{w})$ be some moment of $\mathbf{w}$, moments of $\mathbf{w}$ can be expressed as the weighted sum of the moments of its components.
\begin{proposition}\label{prop:decompose_exp_g}\normalfont
For any $n$ component random vector mixture model $\mathbf{w}$ with components $\mathbf{w}_i$ with mixture weights $w_i$ and any measurable function $g$, we have that $\mathbb{E}[g(\mathbf{w})] = \sum_{i=1}^n w_i\mathbb{E}[g(\mathbf{w}_i)]$.
\end{proposition}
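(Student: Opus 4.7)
The plan is to prove the identity directly from the definition of expectation as a Lebesgue integral against the mixture pdf. First I would write
\[
\mathbb{E}[g(\mathbf{w})] = \int g(\mathbf{x})\, f_\mathbf{w}(\mathbf{x})\, d\mathbf{x},
\]
which is well-defined because $g$ is measurable (and we implicitly assume either $g \ge 0$ or $g(\mathbf{w})$ integrable, so that the expectation makes sense in the first place). Next I would substitute the defining mixture decomposition $f_\mathbf{w}(\mathbf{x}) = \sum_{i=1}^n w_i f_{\mathbf{w}_i}(\mathbf{x})$ from the preceding definition.

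The second step is to swap the finite sum and the integral. Since the sum is over only $n$ terms, this is just linearity of the Lebesgue integral and requires no appeal to Fubini or monotone/dominated convergence. After interchange, each term is
\[
w_i \int g(\mathbf{x})\, f_{\mathbf{w}_i}(\mathbf{x})\, d\mathbf{x} = w_i\, \mathbb{E}[g(\mathbf{w}_i)],
\]
by definition of the expectation of $g(\mathbf{w}_i)$ against the component density $f_{\mathbf{w}_i}$. Summing over $i$ yields exactly $\sum_{i=1}^n w_i \mathbb{E}[g(\mathbf{w}_i)]$, as claimed.

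There is essentially no hard step: the only thing worth being careful about is the integrability assumption so that all the integrals appearing exist and may be rearranged without subtleties. In the nonnegative case no assumption beyond measurability is needed (Tonelli justifies everything, even for countable mixtures); in the signed case one needs $\mathbb{E}[|g(\mathbf{w})|] < \infty$, which by the same argument equals $\sum_i w_i \mathbb{E}[|g(\mathbf{w}_i)|]$, and this in turn ensures every $\mathbb{E}[g(\mathbf{w}_i)]$ is finite so the final sum is well-defined. Thus the proof reduces to one substitution followed by linearity of the integral.
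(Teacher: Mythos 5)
Your proof is correct and follows essentially the same route as the paper's: write the expectation as an integral against the mixture pdf (the law of the unconscious statistician), substitute the defining decomposition $f_\mathbf{w} = \sum_i w_i f_{\mathbf{w}_i}$, and use linearity of the integral over the finite sum. Your added remarks on integrability (nonnegative vs.\ signed $g$) are a small but welcome refinement that the paper leaves implicit.
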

\begin{proof}
By the law of the unconscious statistician and the definition of the mixture model pdf:
\begin{align}
    \mathbb{E}[g(\mathbf{w})] = \int g(\mathbf{x})\sum_{i=1}^nw_if_{\mathbf{w}_i}(\mathbf{x}) d\mathbf{x}
\end{align}
Applying the linearity of expectation we then have:
\begin{align}
    \int g(\mathbf{x})\sum_{i=1}^nw_if_{\mathbf{w}_i}(\mathbf{x}) d\mathbf{x} &= \sum_{i=1}^n w_i\int g(\mathbf{x})f_{\mathbf{w}_i}(\mathbf{x}) d\mathbf{x}\\
    &= \sum_{i=1}^nw_i\mathbb{E}[g(\mathbf{w}_i)]
\end{align}
\end{proof}
\section{Problem Formulation}
\subsection{Definition of Risk}
We define risk in a way that accounts for the size of the agent and ego vehicle. The general idea is to fit circles of radius $r$ to an agent and constrain the probability that the centers of the circles are inside an appropriately scaled ``collision ellipsoid" around the vehicle. Figure \ref{fig:collision_ellipses} illustrates an example; note that if the centers of the circles are not in the ellipsoid, the vehicles are not in collision.
\begin{figure}[!h]
    \centering
    \includegraphics[width=\linewidth]{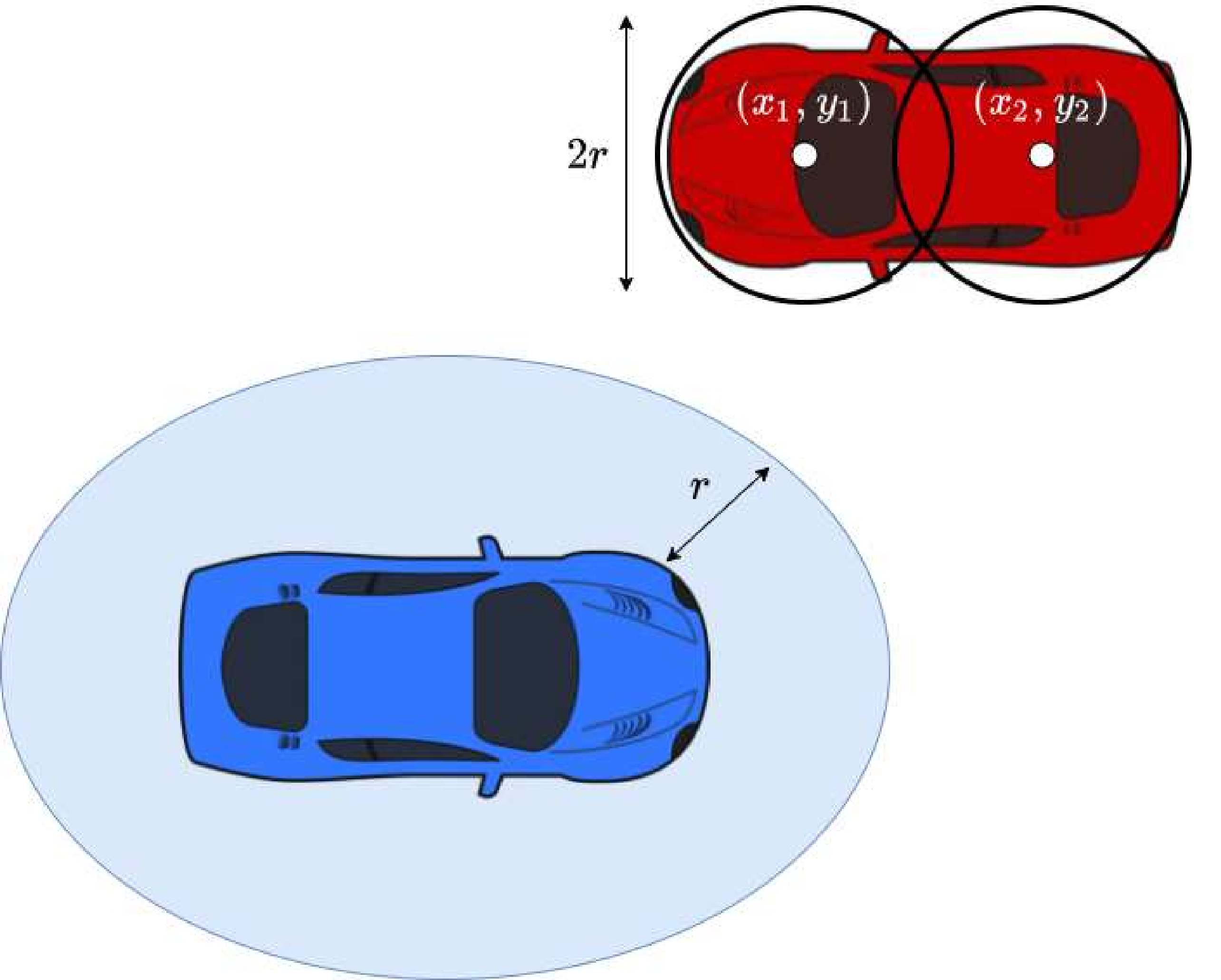}
    \caption{An example showing a collision ellipsoid around  the ego vehicle and corresponding circles around the agent. Note that if the points $(x_i, y_i)$ are not in the ellipsoid, then the vehicles are not in collision.}
    \label{fig:collision_ellipses}
\end{figure}
\begin{figure}[!h]
    \centering
    \includegraphics[width=\linewidth]{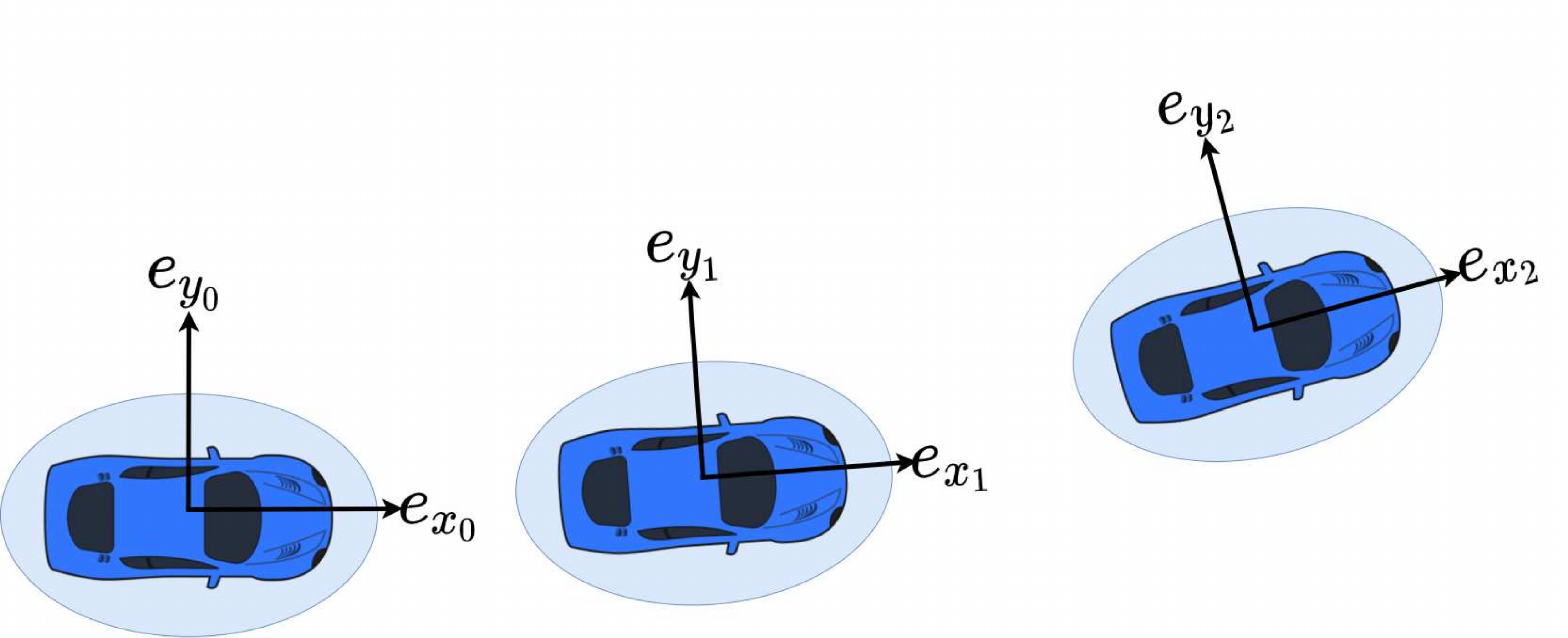}
    \caption{A planned trajectory for the ego vehicle along with the ``planned body frames" and collision ellipsoids drawn around the vehicle.}
    \label{fig:ego_planned_frames}
\end{figure}
We essentially treat each circle as a separate agent. We will define risk in the ``planned body frames", which are depicted by figure \ref{fig:ego_planned_frames}. While the prediction is given in some fixed frame, we eventually show in section \ref{subsec:app_rb} how moments in the planned body frames can be expressed as a function of the fixed frame moments and the planned ego vehicle pose. In the body frame, the ellipsoid is the set:
\begin{align}
    \{\mathbf{x} : \mathbf{x}^TQ\mathbf{x}\leq 1\}
\end{align}
Where $Q$ is a constant $2\times 2$ positive definite matrix. The following notation will be used for quadratic forms as it better reflects the polynomial nature of quadratic forms, and many results in this paper will be for polynomials in general:
\begin{align}
    Q(\mathbf{x}):=\mathbf{x}^TQ\mathbf{x}
\end{align}
We denote the distribution of the $i_{th}$ agent in the planned frame at time $t$ as $\mathbf{a}_{t,i}$; recall this distribution is ultimately a function of the global frame prediction $\mathbf{g}_{t,i}$ and the planned ego vehicle pose. Risk along the horizon is thus defined as:
\begin{align}
    \mathcal{R}:=\Prob\left(\bigcup_{i=1}^{n_a}\bigcup_{t=1}^T\{Q(\mathbf{a}_t)\leq 1\}\right)\label{eq:union_events}
\end{align}
However, evaluating (\ref{eq:union_events}) without excessive conservatism is a non-trivial problem itself meriting a separate treatment. For this paper, we simply apply Boole's Inequality which states:
\begin{align}
    \mathcal{R}\leq \sum_{i=1}^{n_a}\sum_{t=1}^T\Prob(Q(\mathbf{a}_{t,i})\leq1)
\end{align}
To upper bound the total risk $\mathcal{R}$ with some $\Delta$, we define a risk-allocation $\epsilon_{t,i}$ s.t.:
\begin{align}
    \sum_{i=1}^{n_a}\sum_{t=1}^T\epsilon_{t,i}\leq\Delta
\end{align}
And then upper-bound the marginal probabilities as such:
\begin{align}
    \Prob(Q(\mathbf{a}_{t,i})\leq 1)\leq\epsilon_{t,i}
\end{align}
It is possible to encode the risk allocation as decision variables in an optimization problem, but doing so can often be computationally intractable. In this paper, we solve the trajectory planning problem with a fixed risk allocation; other works have presented approaches for ``outer loops" that optimize the risk allocation \cite{ono2008iterative,paulson2020stochastic,ma2012fast}. In addition, throughout the rest of this paper, we present mathematical statements for the single agent case for the sake of notational simplicity; it is straight-forward to simply repeat the chance-constraint in the multi-agent case.
\subsection{The Trajectory Planning Problem}
In this paper, we derive results for the general cc-trajectory planning problem defined below.
\begin{subequations}\label{eq:cc_optimal_control}
\begin{align}
    \min_{\mathbf{x}_{1:T}, \mathbf{u}_{0:T}}\quad & c(\mathbf{x}_{1:T}, \mathbf{u}_{1:T})\label{eq:cc_mpc_cost}\\
    & \mathbf{x}_{t+1} = f(\mathbf{x}_t, \mathbf{u}_t), \hquad t\in[T-1]\\
    & \Prob(Q(\mathbf{a}_t)\leq 1)\leq \epsilon_t, \hquad t\in[T]\label{eq:prob_constraint}\\
    & \mathbf{u}_{min}\leq \mathbf{u}_t\leq \mathbf{u}_{max}, \hquad t\in[T]\\
    & \mathbf{x}_{min}\leq \mathbf{x}_t\leq \mathbf{x}_{max}, \hquad t\in[T]
\end{align}
\end{subequations}
Where $\mathbf{x}_t$ is the state vector, $\mathbf{u}_t$ is the control vector, $c$ is some cost function, $f$ is a discrete time system modeling ego vehicle dynamics, $\mathbf{u}_{min}$ and $\mathbf{u}_{max}$ are control limits and $\mathbf{x}_{min}$ and $\mathbf{x}_{max}$ are state limits. We assume that the ego vehicle dynamics are deterministic as modern feedback control systems for autonomous vehicles are effective at tracking trajectories with positional error on the order of ten centimeters \cite{hoffmann2007autonomous, wang2008autonomous}. Thus, the only difference between this problem and standard deterministic trajectory planning formulations is the chance constraint (\ref{eq:prob_constraint}) which ensures that the probability of the vehicle colliding with an agent is no more than $\epsilon$ at each time step. Section \ref{sec:chance_constraint} presents our approach to enforcing a risk-bound on this chance constraint.

\section{Bounding Non-Gaussian Polynomial Chance-Constraints}\label{sec:chance_constraint}
In this section, we present a general methodology for bounding polynomial non-Gaussian chance-constraints; that is, for a random vector $\mathbf{w}$ and polynomial $p$, we present a method to establish a bound on the probability:
\begin{align}
    \Prob(p(\mathbf{w})\leq 0)\label{eq:general_cc}
\end{align}
Recall that the chance-constraint in our trajectory planning problem (\ref{eq:prob_constraint}) takes this form as quadratic forms are polynomials. By viewing $p(\mathbf{w})$ as a random variable, (\ref{eq:general_cc}) is simply the cumulative distribution function (cdf) of $p(\mathbf{w})$ evaluated at $0$. Unfortunately, even in the relatively simple case where $\mathbf{w}$ is a multivariate Gaussian and $p$ is a quadratic, $p(\mathbf{w})$ does not have a closed form cdf \cite{duchesne2010computing}. In general, it can very challenging to characterize the cdfs of distributions that arise from nonlinear transformations, so our approach is to leverage one-tailed concentration inequalities which bound (\ref{eq:general_cc}) using the mean and variance of $p(\mathbf{w})$. Subsection \ref{subsec:moments_poly} begins by showing how moments of the random variable $p(\mathbf{w})$ can be expressed in closed form in terms of moments of $\mathbf{w}$. This provides us with a way to compute the mean and variance of $p(\mathbf{w})$ given the distribution of $\mathbf{w}$. Subsection \ref{subsec:concentration_ineq} then introduces several concentration inequalities that can be used to bound (\ref{eq:general_cc}) using the mean and variance of $p(\mathbf{w})$. We can directly apply these approaches to mixture models by computing the moments of the mixture models with the approach from subsection \ref{subsec:mm_statistics}. However, we show in subsection \ref{subsec:mixture_model} that tighter bounds can be achieved by instead bounding the components of the mixture model. Finally, subsection \ref{subsec:app_rb} applies these techniques to bound the chance-constraint (\ref{eq:prob_constraint}).

\subsection{Moments of Polynomials in Random Vectors}\label{subsec:moments_poly}
An important property of $p$ being a polynomial is that the $n_{th}$ moment of $p(\mathbf{w})$ can be computed as the weighted sum of moments of $\mathbf{w}$. This is true because $p(\mathbf{w})^n$ is, itself, a polynomial to which the linearity of expectation can be applied. To see this, consider the following simple example where $p(\mathbf{w}) = w_1^2 + w_2^2$:
\begin{subequations}
\begin{align}
    \mathbb{E}[p(\mathbf{w})^2] &= \mathbb{E}[w_1^4 + 2w_1^2w_2^2 + w_2^4]\\
    &= \mathbb{E}[w_1^4] + 2\mathbb{E}[w_1^2w_2^2] + \mathbb{E}[w_2^4]
\end{align}
\end{subequations}
To state the general case, we adopt multi-index notation which allows us to much more succinctly express moments of random vectors. For example, $\mathbb{E}[w_1^2w_2^2]$ can be represented with the vector $\mathbf{w}$ and a multi-index $\alpha=(2,2,0,...,0)$. Letting $\mathbf{w}$ be an $n$ dimensional random vector, we can express any moment of $\mathbf{w}$ with a multi-index $\alpha\in\mathbb{N}^n$ as such where $\alpha_i$ is the $i_{th}$ element of $\alpha$:
\begin{align}
    \mathbb{E}[\mathbf{w}^\alpha] := \mathbb{E}\left[\prod_{i=1}^nw_i^{\alpha_i}\right]
\end{align}
Proposition \ref{prop:moments_poly} uses multi-index notation to express the idea that any moment of $p(\mathbf{w})$ can be expressed in terms of moments of $\mathbf{w}$. AlgebraicMoments can be used to easily derive expressions of the form (\ref{eq:moment_expression}), and it even leverages independence in the random vector $\mathbf{w}$ to further decompose terms of the form $\mathbb{E}[\mathbf{w}^\alpha]$.
\begin{proposition}\label{prop:moments_poly}\normalfont
For a $n$ dimensional random vector $\mathbf{w}$, a polynomial $p$ and $m\in\N$, there exists a set of multi-indices $\mathcal{A}\subset\mathbb{N}^n$ and coefficients $C_\mathcal{A} = \{c_\alpha\in\R : \alpha\in\mathcal{A}\}$ s.t.:
\begin{align}
    \mathbb{E}[p(\mathbf{w})^m] = \sum_{\alpha\in\mathcal{A}}c_\alpha\mathbb{E}[\mathbf{w}^\alpha]\label{eq:moment_expression}
\end{align}
\end{proposition}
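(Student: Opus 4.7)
The plan is to show that $p(\mathbf{w})^m$ is itself a polynomial in $\mathbf{w}$ whose expectation can be taken termwise by linearity. Since polynomials are closed under multiplication and addition, and monomials in multi-index notation satisfy $\mathbf{w}^{\beta}\mathbf{w}^{\gamma}=\mathbf{w}^{\beta+\gamma}$, the result should follow from a direct expansion followed by linearity of expectation.

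First I would write $p$ itself in multi-index form. Since $p$ is a polynomial in $n$ variables, there exists a finite set of multi-indices $\mathcal{B}\subset\mathbb{N}^n$ and coefficients $\{b_\beta\in\R:\beta\in\mathcal{B}\}$ such that
\begin{align}
p(\mathbf{w}) = \sum_{\beta\in\mathcal{B}}b_\beta\mathbf{w}^\beta.
\end{align}
Second, I would raise this expression to the $m$-th power. Distributing the product over the sum (equivalently, applying the multinomial theorem) gives
\begin{align}
p(\mathbf{w})^m = \sum_{(\beta_1,\ldots,\beta_m)\in\mathcal{B}^m}\left(\prod_{j=1}^m b_{\beta_j}\right)\mathbf{w}^{\beta_1+\cdots+\beta_m},
\end{align}
using the key identity $\mathbf{w}^{\beta_1}\cdots\mathbf{w}^{\beta_m}=\mathbf{w}^{\beta_1+\cdots+\beta_m}$.

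Third, I would collect like terms by defining $\mathcal{A}:=\{\beta_1+\cdots+\beta_m : (\beta_1,\ldots,\beta_m)\in\mathcal{B}^m\}\subset\mathbb{N}^n$ and, for each $\alpha\in\mathcal{A}$,
\begin{align}
c_\alpha := \sum_{\substack{(\beta_1,\ldots,\beta_m)\in\mathcal{B}^m \\ \beta_1+\cdots+\beta_m=\alpha}}\prod_{j=1}^m b_{\beta_j}\in\R,
\end{align}
so that $p(\mathbf{w})^m=\sum_{\alpha\in\mathcal{A}}c_\alpha\mathbf{w}^\alpha$. Note $\mathcal{A}$ and $C_\mathcal{A}$ are finite since $\mathcal{B}$ is finite. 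Finally, applying linearity of expectation to this finite sum yields $\mathbb{E}[p(\mathbf{w})^m]=\sum_{\alpha\in\mathcal{A}}c_\alpha\mathbb{E}[\mathbf{w}^\alpha]$, completing the proof.

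There is essentially no hard step here; the claim is a structural observation that polynomials of polynomials are polynomials, combined with linearity of expectation. The only subtlety worth flagging is ensuring that all relevant mixed moments $\mathbb{E}[\mathbf{w}^\alpha]$ exist and are finite so that linearity of expectation applies, which holds whenever the moments of $\mathbf{w}$ up to total degree $m\cdot\deg(p)$ are finite—an assumption that is implicit in any setting where the proposition is used.
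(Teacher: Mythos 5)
Your proof is correct and follows essentially the same route as the paper's: observe that $p(\mathbf{w})^m$ is itself a polynomial (closure of the polynomial ring under multiplication), expand it into monomials, and apply linearity of expectation. Your version is simply more explicit in constructing $\mathcal{A}$ and $C_\mathcal{A}$, and the remark about requiring finiteness of the moments up to degree $m\cdot\deg(p)$ is a reasonable point that the paper leaves implicit.
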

\begin{proof}
Since $p$ is a polynomial, $p(\mathbf{w})^m$ is also a polynomial in $\mathbf{w}$ since the ring of polynomials is closed under multiplication. Thus, we have the existence of $\mathcal{A}$ and $C_\mathcal{A}$ s.t. $p(\mathbf{w})^m = \sum_{\alpha\in\mathcal{A}}c_\alpha\mathbf{w}^\alpha$. Applying the expectation operator to both sides and the linearity of expectation, we arrive at the result.
\end{proof}

\subsection{Bounding Risk with Concentration Inequalities}\label{subsec:concentration_ineq}
The prior subsection shows how moments of $p(\mathbf{w})$ can be expressed in terms of moments of $\mathbf{w}$; in this section we show how the mean and variance of $p(\mathbf{w})$, $\mu_{p(\mathbf{w})}$ and $\sigma_{p(\mathbf{w})}^2$, can be used to bound risk using concentration inequalities. We start with Cantelli's inequality \cite{haimes2012research}, also known as the one-tailed Chebyshev Inequality, which bounds the probability of constraint violation as such:
\begin{equation}\label{eq:cheb}
    \Prob(p(\mathbf{w})\leq 0)\begin{cases}\leq \frac{\sigma^2_{p(\mathbf{w})}}{\sigma^2_{p(\mathbf{w})} + \mu_{p(\mathbf{w})}^2} & \mu_{p(\mathbf{w})} \geq 0\\
    \geq 1 - \frac{\sigma^2_{p(\mathbf{w})}}{\sigma^2_{p(\mathbf{w})} + \mu_{p(\mathbf{w})}^2} & \mu_{p(\mathbf{w})} < 0
    \end{cases}
\end{equation}
However, for many applications, Cantelli's inequality can be excessively conservative; in fact, it is often sharp only for discrete distributions. By making additional mild assumptions, we can arrive at tighter bounds with the Vysochanskij-Petunin (VP) and Gauss inequalities, which are very similar \cite{padulo2011worst,popescu2005semidefinite,vysochanskij1980justification}. Our proposed strategy is to adopt the tightest inequality for which $p(\mathbf{w})$ meets the given assumptions; the table below summarizes the assumptions for each inequality.
\begin{table}[!h]
\centering
\caption{Assumptions required for concentration inequalities.}
\begin{tabular}{|l|l|}
\hline
\textbf{Inequality} & \textbf{Assumptions}                     \\ \hline
Cantelli   & $p(\mathbf{w})$ has finite mean + variance          \\ \hline
Vysochanskij-Petunin (VP)         & Cantelli assumptions + unimodal \\ \hline
Gauss      & VP assumptions + symmetric pdf      \\ \hline
\end{tabular}
\label{table:concentration_inequalties_constraints}
\end{table}
Since the inequalities are very similar, we simplify future notation by defining the $\text{Conc}[\cdot]$ operator to denote the tightest appropriate concentration inequality and $\text{Conc}^*[\cdot]$ to denote the corresponding necessary condition for the inequality to hold. In practice, the practitioner would have to select the correct inequality to use on a case-by-case basis.
\begin{align}\label{eq:con_constar}
    \text{Conc}[p(\mathbf{w})] &= \begin{cases}
        \frac{\sigma_{p(\mathbf{w})}^2}{\sigma_{p(\mathbf{w})}^2 + \mu_{p(\mathbf{w})}^2}  \quad &\scriptsize\text{Cantelli Holds}\\
        \frac{4}{9}\frac{\sigma_{p(\mathbf{w})}^2}{\sigma_{p(\mathbf{w})}^2 + \mu_{p(\mathbf{w})}^2} &\scriptsize\text{VP Holds}\\
        \frac{2}{9}\frac{\sigma_{p(\mathbf{w})}^2}{\mu_{p(\mathbf{w})}^2} &\scriptsize\text{Gauss Holds}
    \end{cases}
\end{align}
\begin{align}
        \text{Conc}^*[p(\mathbf{w})] &= \begin{cases}
    -\mu_{p(\mathbf{w})} \quad & \scriptsize\text{Cantelli Holds}\\
    -\mu_{p(\mathbf{w})} + \sqrt{\frac{5}{3}}\sigma_{p(\mathbf{w})} & \scriptsize\text{VP Holds}\\
    -\mu_{p(\mathbf{w})} + \frac{2}{3}\sigma_{p(\mathbf{w})} &\scriptsize\text{Gauss Holds}
    \end{cases}
\end{align}
Thus, an $\epsilon$ chance-constrain can be bounded in an optimization problem by enforcing the constraints:
\begin{align}
    \text{Conc}[p(\mathbf{w})]\leq\epsilon\\
    \text{Conc}^*[p(\mathbf{w})]\leq 0
\end{align}
Note that the $\text{Conc}^*$ conditions are not particularly restrictive, as they are only violated when risk is relatively high. In fact, the $\text{Conc}^*$ condition only breaks when the upper-bound is at least $1, 1/6$, or $1/2$ for the Cantelli, VP, and Gauss cases respectively, which is well above values usually specified for chance-constraints in practice.

\subsection{Tighter Bounds for Mixture Models}\label{subsec:mixture_model}
In the case that $\mathbf{w}$ is a mixture model, we can simply compute the moments of $p(\mathbf{w})$ using the result of Proposition \ref{prop:decompose_exp_g} in terms of moments of the components $p(\mathbf{w}_i)$ and treat the mixture model as any other random variable. However, Proposition \ref{prop:decompose_exp_g} also seems to suggest that we can instead bound the component probabilities $\Prob(p(\mathbf{w}_i)\leq 0)$ and bound the overall risk with the weighted sum of the component bounds. Intuition suggests this approach may be better because it involves applying concentration inequalities at the most detailed level possible. Since concentration inequalities are, in a sense, blanket statements about distributions with a given set of moments, it makes sense to apply them at the most detailed level possible. In fact, we show with Theorem \ref{thm:cheb_mms} that applying concentration inequalities to the mixture components individually will almost certainly produce a less conservative risk bound.
\begin{theorem}\label{thm:cheb_mms}\normalfont
For any random vector mixture model $\mathbf{w}$ with $n$ components $\mathbf{w}_i$ and weights $w_i$ and any measurable function $g$, if $\text{Conc}^*[g(\mathbf{w}_i)]\leq 0, \forall i\in[n]$, then:
\begin{subequations}
\begin{align}
    \Prob(g(\mathbf{w})\leq 0)&\leq \sum_{i=1}^n w_i\text{Conc}(g(\mathbf{w}_i))\label{eq:decompose_cheb}\\
    &\leq \text{Conc}(g(\mathbf{w}))\label{eq:mm_thm_leq}
\end{align}
\end{subequations}
For Cantelli and VP, almost surely, we have:
\begin{align}
    \sum_{i=1}^n w_i\text{Conc}(g(\mathbf{w}_i)) < \text{Conc}(g(\mathbf{w}))\label{eq:mm_conservatism}
\end{align}
\end{theorem}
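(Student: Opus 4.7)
The plan is to prove the three claims in sequence. The first inequality (\ref{eq:decompose_cheb}) follows from decomposing the mixture probability via Proposition~\ref{prop:decompose_exp_g} applied to an indicator function and then bounding each component probability with its concentration inequality. The second inequality (\ref{eq:mm_thm_leq}) reduces to a Cauchy--Schwarz-type algebraic inequality between the mixture moments and the component moments; the strict inequality (\ref{eq:mm_conservatism}) then drops out from the standard equality conditions of Cauchy--Schwarz.

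First I would establish (\ref{eq:decompose_cheb}). Writing $\Prob(g(\mathbf{w})\leq 0)=\mathbb{E}[\mathds{1}\{g(\mathbf{w})\leq 0\}]$ and noting that the composition with the indicator of $(-\infty,0]$ is measurable, Proposition~\ref{prop:decompose_exp_g} gives $\Prob(g(\mathbf{w})\leq 0)=\sum_i w_i \Prob(g(\mathbf{w}_i)\leq 0)$. The hypothesis $\text{Conc}^*[g(\mathbf{w}_i)]\leq 0$ is precisely the condition required for the chosen concentration inequality to hold on each component, so $\Prob(g(\mathbf{w}_i)\leq 0)\leq \text{Conc}[g(\mathbf{w}_i)]$, and weighting by $w_i$ and summing yields the claim.

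For (\ref{eq:mm_thm_leq}) in the Cantelli case, I would use Proposition~\ref{prop:decompose_exp_g} on $g$ and $g^2$ to obtain $\mu_{g(\mathbf{w})}=\sum_i w_i\mu_{g(\mathbf{w}_i)}$ and $\mathbb{E}[g(\mathbf{w})^2]=\sum_i w_i\mathbb{E}[g(\mathbf{w}_i)^2]$, hence $\sigma_{g(\mathbf{w})}^2+\mu_{g(\mathbf{w})}^2=\sum_i w_i(\sigma_{g(\mathbf{w}_i)}^2+\mu_{g(\mathbf{w}_i)}^2)$. Rewriting $\text{Conc}[X]=1-\mu_X^2/\mathbb{E}[X^2]$ turns the inequality into
\[
\mu_{g(\mathbf{w})}^2 \leq \Bigl(\sum_i w_i(\sigma_{g(\mathbf{w}_i)}^2+\mu_{g(\mathbf{w}_i)}^2)\Bigr)\Bigl(\sum_i \tfrac{w_i\mu_{g(\mathbf{w}_i)}^2}{\sigma_{g(\mathbf{w}_i)}^2+\mu_{g(\mathbf{w}_i)}^2}\Bigr),
\]
which is exactly Cauchy--Schwarz applied to the vectors with entries $\sqrt{w_i(\sigma_{g(\mathbf{w}_i)}^2+\mu_{g(\mathbf{w}_i)}^2)}$ and $\sqrt{w_i\mu_{g(\mathbf{w}_i)}^2/(\sigma_{g(\mathbf{w}_i)}^2+\mu_{g(\mathbf{w}_i)}^2)}$, once I note that $\mu_{g(\mathbf{w}_i)}\geq 0$ under the Conc* hypothesis, so $\mu_{g(\mathbf{w})}=\sum_i w_i|\mu_{g(\mathbf{w}_i)}|$. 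The VP case differs only by a global factor $4/9$ on both sides, so the same proof carries over. For (\ref{eq:mm_conservatism}) I would observe that equality in Cauchy--Schwarz would require $(\sigma_{g(\mathbf{w}_i)}^2+\mu_{g(\mathbf{w}_i)}^2)/|\mu_{g(\mathbf{w}_i)}|$ to be independent of $i$, a codimension-one condition on the moments that fails outside a measure-zero set.

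The main obstacle I anticipate is the Gauss case, whose Conc formula has $\mu^2$ rather than $\sigma^2+\mu^2$ in the denominator and hence does not admit the clean additive decomposition used above. I would handle this by remarking that a mixture of unimodal symmetric distributions is generically neither unimodal nor symmetric, so the Gauss form of $\text{Conc}[g(\mathbf{w})]$ is not a legitimate choice for the mixture side; the weakest (and hence valid) choice for $\text{Conc}[g(\mathbf{w})]$ is then Cantelli or VP, reducing the claim to the already-treated case, and since Gauss applied componentwise only shrinks the left-hand side relative to Cantelli/VP, the chain of inequalities remains intact.
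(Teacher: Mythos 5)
Your proof is correct for the Cantelli and VP cases and, despite the different packaging, is essentially the paper's argument: the paper writes the Cantelli bound as $1-\phi(\mu_{g(\mathbf{w})},\mathbb{E}[g(\mathbf{w})^2])$ with $\phi(x,y)=x^2/y$ and applies Jensen's inequality using convexity of $\phi$ (Lemma \ref{lemma:convex_func}), and Jensen for $x^2/y$ over a finite convex combination is exactly the Engel-form Cauchy--Schwarz inequality you invoke. Your equality condition --- $(\sigma_{g(\mathbf{w}_i)}^2+\mu_{g(\mathbf{w}_i)}^2)/|\mu_{g(\mathbf{w}_i)}|$ constant in $i$ --- is the same codimension-one, measure-zero condition the paper states geometrically as two components generating the same line $\hat{D}_i$ in first--second moment space, so the almost-sure strictness argument also coincides. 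Where you genuinely diverge is the Gauss case, and there your instinct is sounder than the paper's: the paper asserts that ``the exact same argument'' applies by establishing convexity of $\sigma^2/\mu^2$, but $(x,y)\mapsto(y-x^2)/x^2$ has indefinite Hessian, and the componentwise-Gauss versus mixture-Gauss version of (\ref{eq:mm_thm_leq}) is in fact false: with $w_1=w_2=\tfrac12$, $\mu_1=1$, $\mu_2=2$, $\sigma_1^2=\sigma_2^2=1$ (which satisfies the Gauss $\text{Conc}^*$ condition), one gets $\sum_i w_i\sigma_i^2/\mu_i^2=0.625>5/9=\sigma^2/\mu^2$. Your repair --- the mixture is generically neither symmetric nor unimodal, so $\text{Conc}[g(\mathbf{w})]$ on the right-hand side must be read as Cantelli or VP, and the componentwise Gauss bounds are dominated --- is the right idea, but it has one loose end: under the Gauss validity condition $\mu_i\geq\tfrac{2}{3}\sigma_i$ one indeed has $\tfrac{2}{9}\sigma_i^2/\mu_i^2\leq\sigma_i^2/(\sigma_i^2+\mu_i^2)$ (Cantelli), but not necessarily $\leq\tfrac{4}{9}\sigma_i^2/(\sigma_i^2+\mu_i^2)$ (VP), which fails for $\sigma_i^2/\mu_i^2>1$; so the chain only closes when the mixture-side bound is taken to be Cantelli, and you should say that explicitly rather than ``Cantelli or VP.''
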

\begin{proof}
See appendix.
\end{proof}
\begin{remark}\normalfont
``Almost surely" in Theorem \ref{thm:cheb_mms} means that if the first and second moments of each component are randomly chosen from $\R^2$ according to any distribution supported on a subset of $\R^2$, then the result holds with probability one.
\end{remark}
We arrive at the ``almost surely" result by establishing a geometric sufficient condition in the space of first and second moments. We show that each component of a mixture model generates a line in the first-second moment space. If the first-second moment point of each component does not lie on the line generated by any other component, the result holds. Since the union of a finite number of lines has Lebesgue measure zero with respect to $\R^2$, the probability of randomly choosing first and second moments that violate the condition is zero. Even if the condition is violated, an arbitrarily small perturbation of the moments would satisfy the condition. Figure \ref{fig:random_first_second_moments} illustrates an example that satisfies this condition.
\begin{figure}[!h]
    \centering
    \includegraphics[width=0.9\linewidth]{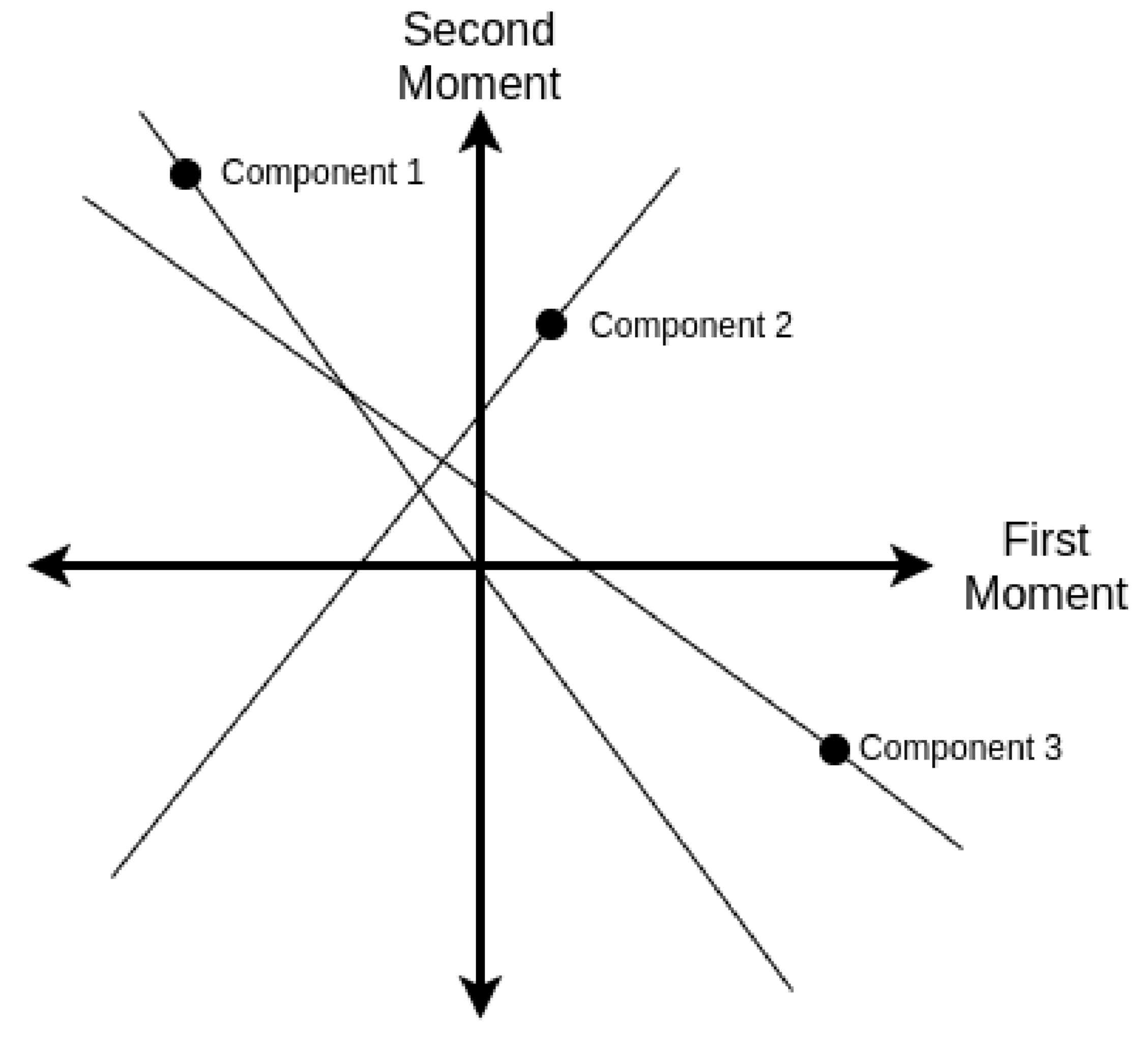}
    \caption{The first and second moments of three components of a mixture model are plotted along with the lines they generate. Note how every component lies on the line it generates, but does not lie on the lines generated by any other component. Thus, for this example, the ``almost surely" result of Theorem \ref{thm:cheb_mms} applies.}
    \label{fig:random_first_second_moments}
\end{figure}

\subsection{Bounding Risk for the Trajectory Planning Problem}\label{subsec:app_rb}
We now return to the problem of establishing a bound on:
\begin{align}
    \Prob(Q(\mathbf{a}_t)\leq 1)
\end{align}
Letting $\mathbf{a}_t^{(i)}$ denote the $i_{th}$ component of $\mathbf{a}_t$, this essentially consists of applying the methods of the prior subsections to the polynomial:
\begin{align}
    Q(\mathbf{a}_t^{(i)})-1
\end{align}
However, there is one small additional complication: $\mathbf{a}_t^{(i)}$ is in the body frame, making it a function of the planned ego vehicle pose and the global frame distribution $\mathbf{g}_t^{(i)}$. Letting $\mathbf{y}_t = [x_t, y_t]$ denote the ego vehicle position, $\theta_t$ denote the ego vehicle heading, and $R(\cdot)$ denote the 2D rotation matrix, they are related by:
\begin{align}
    \mathbf{a}_t^{(i)} = R(\theta_t)^T(\mathbf{g}_t^{(i)}-\mathbf{y}_t)
\end{align}
The idea here is to view $\mathbf{a}_t^{(i)}$ as a polynomial in $\mathbf{g}_t^{(i)}$ and to view the elements of both $R(\theta_t)^T$ and $\mathbf{y}_t$ as elements of the coefficients. From this perspective, we apply AlgebraicMoments to express moments of $\mathbf{a}_t^{(i)}$ in terms of moments of $\mathbf{g}_t^{(i)}$. The moments of $Q(\mathbf{a}_t^{(i)})-1$ can, in turn, be expressed in terms of the moments of $\mathbf{a}_t^{(i)}$. With the first and second moments of $Q(\mathbf{a}_t^{(i)})-1$ expressed in closed form, we can enforce the following constraints in the optimization problem where $n$ is the number of mixture components:
\begin{align}
    &\sum_{i=1}^n\text{Conc}[Q(\mathbf{a}_t^{(i)})-1]\leq\epsilon_t,\hquad t\in[T]\\
    &\text{Conc}^*[Q(\mathbf{a}_t^{(i)})-1]\leq 0,\hquad i\in[n]\hquad t\in[T]
\end{align}
AlgebraicMoments provides the functionality to directly generate code for the above expressions given a specification.

\section{Application to Trajectory Planning with Model Predictive Contouring Control}
In the context of autonomous driving, an approximate coarse-grained path is almost always available to trajectory planners as most autonomous driving systems have maps of lane geometries \textit{a priori} and also have routers and higher level planners that provide a discrete plan (e.g: a sequence of waypoints along the road). This availability of a ``reference path" makes model predictive contouring control (MPCC) a useful approach. MPCC is a methodology for expressing an approximation of the minimum distance from a point to a third order polynomial in closed form. This allows for deviations from the reference path to be included in the cost function of an optimization problem. By jointly applying MPCC and our chance-constraint formulation, we arrive at a trajectory planner that can find trajectories with low deviation from the reference path while satisfying some desired level of safety, allowing for the generation of rich qualitative behavior. In the following subsections, we present a brief overview of the MPCC formulation; the interested reader is referred to \cite{schwarting2017parallel, lam2012model} for additional details.

\subsection{Reference Path}\label{subsec:ref_path}
Following the standard contouring control formulation, the reference path is represented as third order polynomials in an arc-length parameter $s\in[0, L]$ where $L$ is the length:
\begin{align}
    \begin{bmatrix}
    x_{ref}(s)\\
    y_{ref}(s)
    \end{bmatrix} = 
    \begin{bmatrix}
    c_{x0} + c_{x1}s + c_{x2}s^2 + c_{x3}s^3\\
    c_{y0} + c_{y1}s + c_{y2}s^2 + c_{y3}s^3
    \end{bmatrix}
\end{align}
Where $c_{xi}$ and $c_{yi}$ for $i\in[3]$ are the polynomial coefficients. Parameterizing a third order polynomial path with arc-length is a difficult problem itself without exact solutions, but approximation methods are well-studied \cite{peterson2006arc, floater2005arc}. For our experiments, we applied a simple approximation by initially generating the polynomials with $s\in[0, 1]$ and then the lengths of the polynomials were computed by numerical integration. The coefficients are then scaled s.t. $s\in[0, L]$. The heading at each point on the reference path, denoted $\Theta(s)$, can be expressed as such:
\begin{align}
    \Theta(s) = \arctan\left(\frac{\partial y_{ref}}{\partial x_{ref}}(s)\right)
\end{align}
\subsection{Contouring Deviation and Lag Error}\label{subsec:contour_lag}
Ideally, the Euclidean distance from the ego vehicle to the nearest point on the reference path, which we will refer to as \textit{contouring deviation}\footnote{In the literature, this is usually known as \textit{contouring error}, but we call it contouring deviation as deviation from the reference path to satisfy chance-constraints is not necessarily undesirable.}, would be used as the measure of deviation from the reference path, but doing so requires a minimization over the path parameter $s$ that is computationally intractable to perform in an optimization routine. The standard solution is to approximate the contouring deviation by using the distance the vehicle has travelled, denoted $\Delta$, as an approximation. This only requires adding an additional integrator variable to the dynamics model as the time derivative of $\Delta$ is the vehicles speed. Letting $\bar{x}_t = x_t - x_{ref}(\Delta_t)$ and $\bar{y}_t = y_t - y_{ref}(\Delta_t)$, contouring deviation can be approximated with:
\begin{align}
    D_t = \sin(\Theta(\Delta_t))\bar{x}_t - \cos(\Theta(\Delta_t))\bar{y}_t
\end{align}
It is also important to penalize error between $\Delta$ and the true parameter corresponding to the closest point on the path to the vehicle. This quantity is known as the \textit{lag error} and can be approximated by:
\begin{align}
    L_t = -\cos(\Theta(\Delta_t))\bar{x}_t - \sin(\Theta(\Delta_t))\bar{y}_t
\end{align}

\subsection{Ego Vehicle Model}
For driving in nominal conditions, the kinematic bicycle model is known to provide a high level of fidelity while requiring less computational cost than a dynamics model making it well suited for trajectory planning \cite{kong2015kinematic}. The state of the vehicle is defined as $\mathbf{x} = [x, y, \theta, v, \delta, \Delta]^T$ where $x, y$ denotes the vehicle's position and $\theta$ denotes the heading in the global coordinates. $v$ denotes speed, $\delta$ denotes the front steering angle, and $\Delta$ denotes the distance traveled. The control inputs are $\mathbf{u} = [u_a, u_\delta]^T$ where $u_a$ is acceleration and $u_\delta$ is the rate of change of the steering angle. The relevant physical parameters of the vehicle in this model are the distances from the center of gravity to the front and rear axles; we denote them $l_f$ and $l_r$ respectively. The continuous time model is thus:
\begin{align}
    \dot{\mathbf{x}} = \begin{bmatrix}
    v\cos(\theta + \beta)\\
    v\sin(\theta + \beta)\\
    \frac{v}{l_r}\sin(\beta)\\
u_a\\
    u_\delta\\
    v
    \end{bmatrix} \quad     \beta = \arctan\left(\frac{l_r}{l_f + l_r}\tan(\delta)\right)
\end{align}

\subsection{Optimization Statement}
In the cost function, we penalize contouring deviation, lag error, control effort, and deviation from a reference speed (e.g: the roads speed limit):
\begin{align}
c(\mathbf{x}_t,\mathbf{u}_t) = c_{D}D_t^2 + c_L L_t^2 + \mathbf{u}_t^TR\mathbf{u}_t + c_v(v_t - v^*)^2
\end{align}
Where $c_D, c_L, c_v\in\R$ and $R\in\mathcal{S}_{++}^2$ are cost function parameters and $v^*$ is the reference speed. Letting $f_{RK4}$ denote the RK4 approximation of the bicycle model, the full problem we solve is:
\begin{subequations}
\begin{align}
    \min &\sum_{t=0}^Tc(\mathbf{x}_t,\mathbf{u}_t)\\
    &\mathbf{x}_{t+1} = f_{RK4}(\mathbf{x}_t,\mathbf{u}_t),\hquad t\in[T-1]\\
    &\sum_{i=1}^n\text{Conc}[Q(\mathbf{a}_t^{(i)})-1]\leq\epsilon_t,\hquad t\in[T]\label{eq:rk4_bicycle}\\
    &\text{Conc}^*[Q(\mathbf{a}_t^{(i)})-1]\leq 0,\hquad i\in[n]\hquad t\in[T]\\
    &\mathbf{x}_{min}\leq\mathbf{x}_t\leq\mathbf{x}_{max},\hquad t\in[T]\\
    &\mathbf{u}_{min}\leq\mathbf{u}_t\leq\mathbf{u}_{max},\hquad t\in[T-1]
\end{align}
\end{subequations}
\section{Numerical Experiments}\label{sec:numerical_experiments}
\begin{figure}
    \centering
    \includegraphics[width=\linewidth]{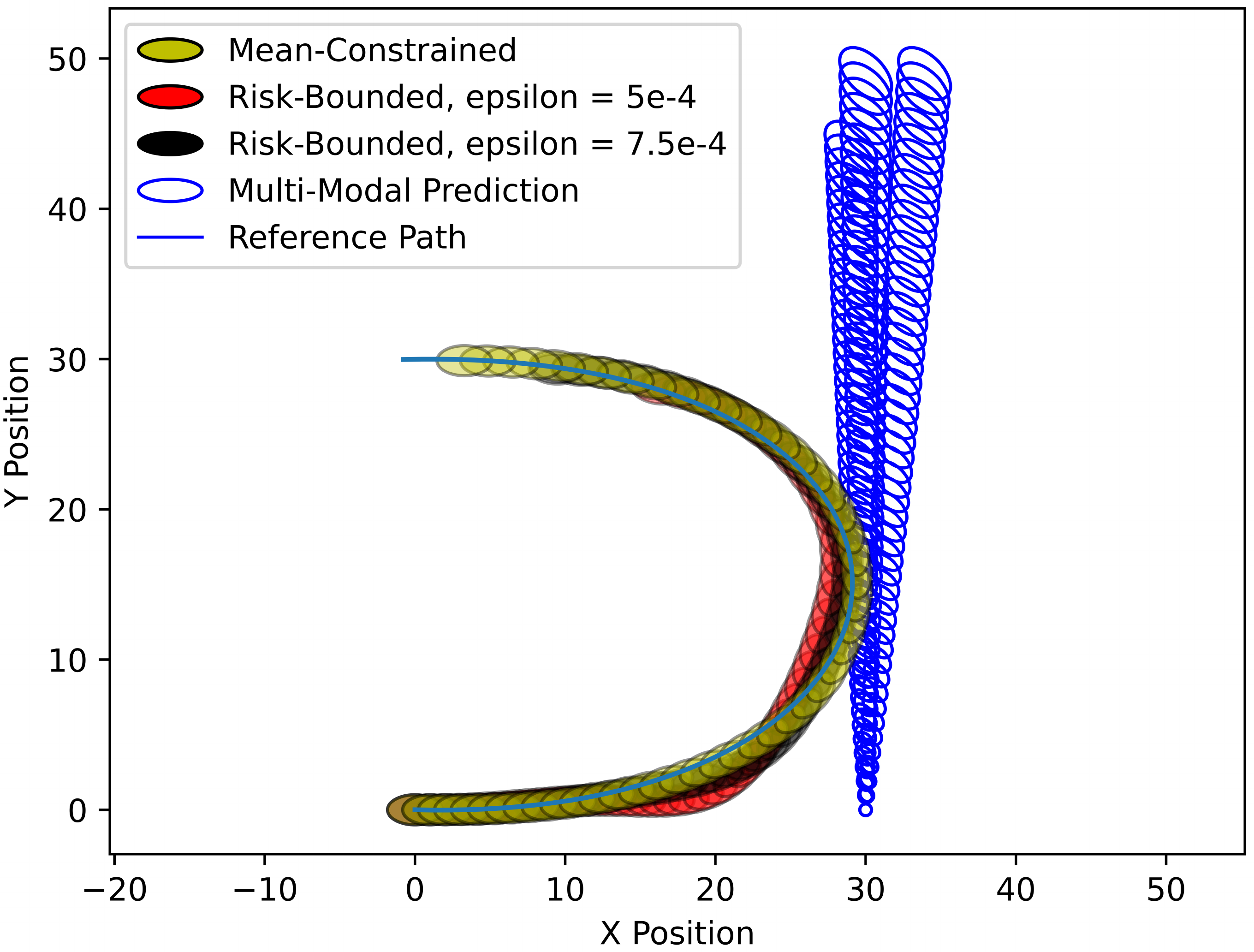}
    \caption{Comparison between trajectories planned with a deterministic mean-constrained formulation and the risk-bounded formulation with two different chance-constraints. Both risk-bounded trajectories exhibit more cautious behavior when performing the U-turn, by taking a wider and slower turn to satisfy chance-constraints with the lower chance-constraint producing a noticeably more conservative result. The prediction is represented with two standard deviation confidence ellipses.}
    \label{fig:deflect_off_reference_path}
\end{figure}
For numerical experiments, we used optimizers generated with FORCES Pro, a software package that generates high performance interior-point solvers for optimal control problems \cite{domahidi2014forces, zanelli2020forces}. A third order polynomial path emulating a U-turn was generated to serve as the reference path. Cost parameters are included in the appendix. Example GMM predictions were created by linearly interpolating the mean vectors and covariance matrices of manually specified end-point GMMs. To demonstrate the advantages of using non-Gaussian distributions with our approach, predictions consisting of mixtures of truncated multivariate Gaussians, which we will refer to as truncated GMMs, were also created by performing a box truncation of the GMM predictions at two standard deviations. We believe truncated GMMs are effective models for agents as reachability analysis can be applied in practice to determine bounds on the positions of agents. $50$ time steps were used with intervals of $0.1$ seconds for a planning horizon of $5$ seconds. Initial guesses for the optimizer were produced by simulating the system (\ref{eq:rk4_bicycle}) with constant control inputs.

Figure \ref{fig:deflect_off_reference_path} shows two trajectories planned with two different formulations: one deterministic and the other risk-bounded. The deterministic formulation uses the means of the agent predictions to enforce a deterministic constraint on collision. The risk-bounded formulation set $\epsilon=0.0005$ and $\epsilon=0.00075$ and utilized the VP inequality. Note how the risk-bounded formulation plans qualitatively more cautious trajectories by taking  wider turns and slower speeds. 

As mentioned earlier, the unbounded support of multivariate Gaussians limits their realism as models for agent position as the forward reachable set of agents is bounded by the laws of physics. To demonstrate the advantages of using a distribution with bounded support, we compared trajectories planned using moments of GMM predictions with trajectories planned using truncated GMMs with the same mean and covariance. While the distributions' means and covariances match, their higher order moments differ significantly as the kurtosis, a metric for the ``heaviness" of tails, is expressed by the fourth order moment. These higher order moments are captured in our formulation because the variance of the random variable $Q(\mathbf{a}_t)-1$ is a function of moments of the prediction up to order four. By leveraging this additional moment information, the planner can plan less conservative trajectories with truncated GMMs that are still safe; figure \ref{fig:truncated_comparison} shows an example.
\begin{figure}[!h]
    \centering
    \includegraphics[width=\linewidth]{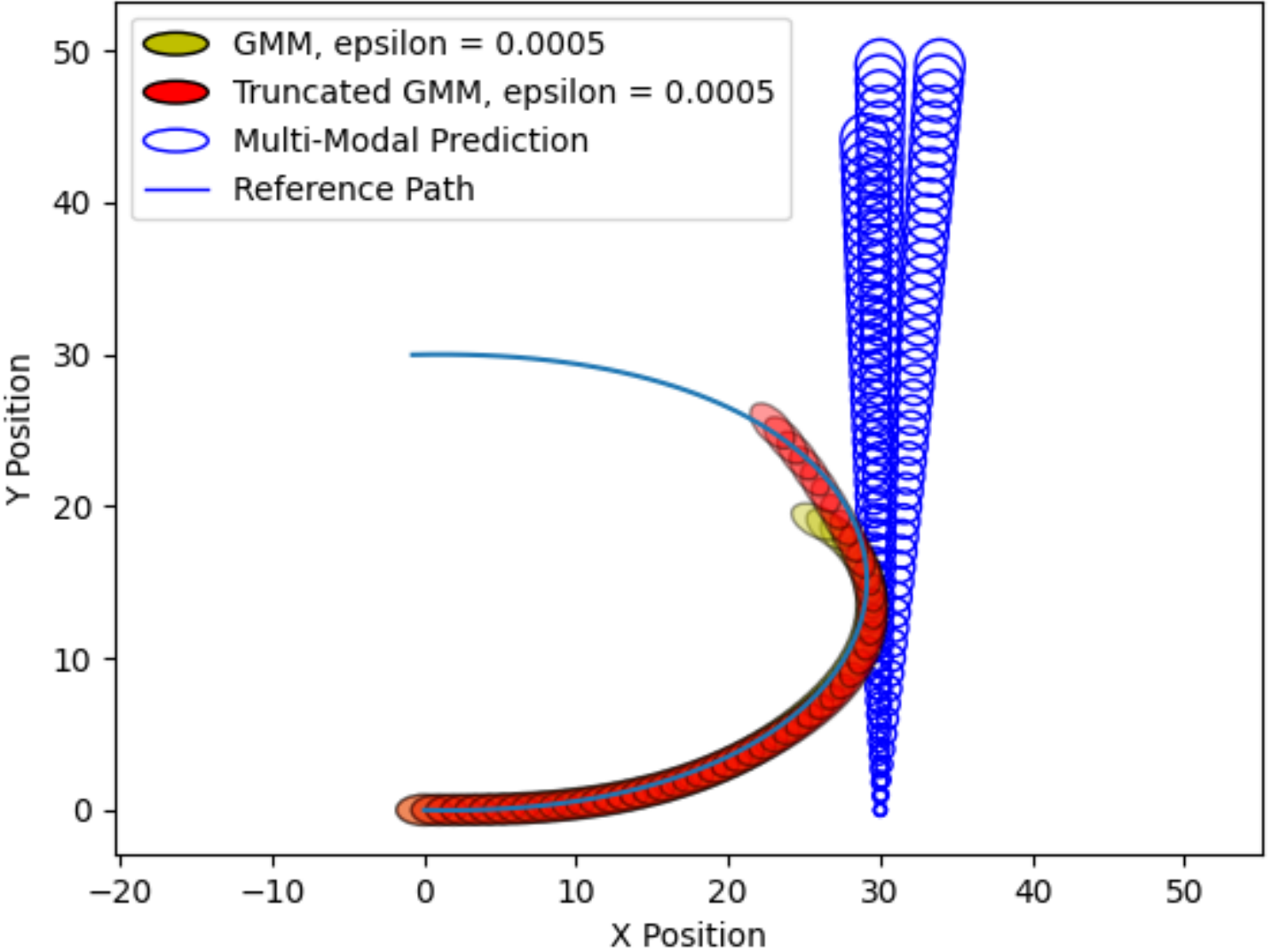}
    \caption{Planned trajectories from using a GMM and a truncated GMM with the same mean vector and covariance.}
    \label{fig:truncated_comparison}
\end{figure}

To test at a larger scale, 100 different U-turn scenarios were generated by perturbing the parameters of the polynomial path and 10 chance-constraint settings were tested for each formulation on each scenario, totaling 2000 optimizations. As shown in figure \ref{fig:compare_results}, the truncated GMM formulation solves the problem more reliably than the GMM formulation and also produces lower cost trajectories. Over 2000 optimizations, the average solution time was 12.9 ms, making this formulation fast enough for use online.
\begin{figure}[!h]
    \centering
    \includegraphics[width=\linewidth]{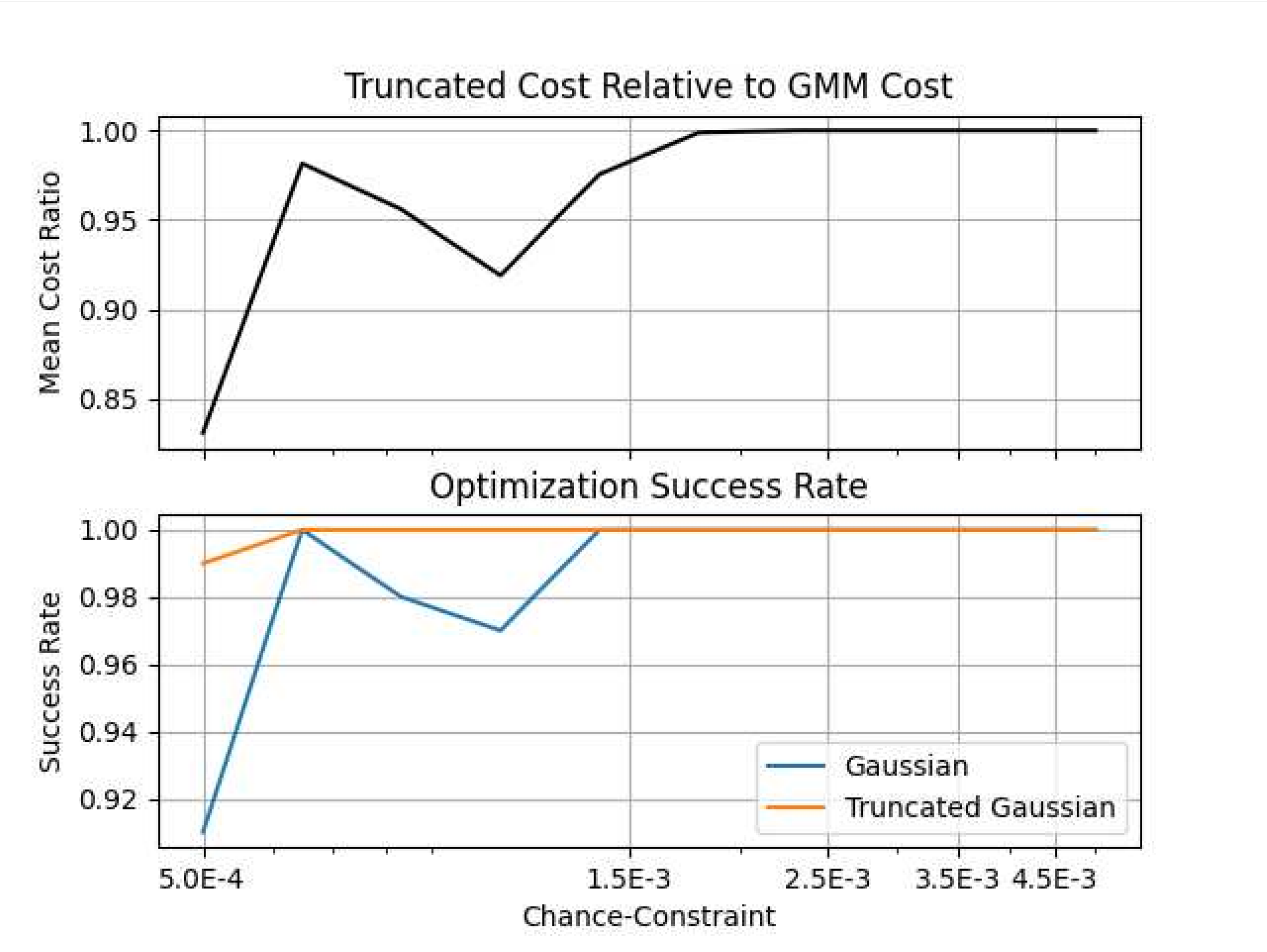}
    \caption{Plots of the mean cost ratio and solution success rates across the range of chance-constraints tested.}
    \label{fig:compare_results}
\end{figure}

\section{CONCLUSIONS}
In this paper, we presented a chance-constrained trajectory planning formulation for autonomous vehicles that can handle mixtures of non-Gaussian distributions of agent positions and does not need to make point mass assumptions. To enforce the chance-constraint, we present a general framework that leverages symbolic algebra to generate expressions that upper-bound polynomial chance-constraints in terms of statistical moments of the underlying distribution. We demonstrated this approach with numerical experiments and show the optimization problem is fast enough for use online. We also provide a Python package, AlgebraicMoments, to enable other members of the community to adopt our approach to bounding chance-constraints. We note that while enforcing chance-constraints using the presented concentration inequalities provides great generality, this generality can also produce excessively conservative results. Future works should consider leveraging more distribution specific information to establish tighter bounds.

\section{Appendix}
\begin{lemma}\label{lemma:convex_func}
The function $\phi(x, y) = \frac{x^2}{y}$  on the domain $x\in\R$ and $y > 0$ is convex.
\end{lemma}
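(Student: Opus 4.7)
The plan is to prove convexity by showing that the Hessian of $\phi$ is positive semidefinite on the open half-space $\{(x,y) : y>0\}$, which is itself a convex domain so that the standard second-order characterization of convexity applies.

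First I would compute the four second-order partial derivatives of $\phi$. Writing $\phi_x = 2x/y$ and $\phi_y = -x^2/y^2$, one more round of differentiation yields $\phi_{xx} = 2/y$, $\phi_{xy} = \phi_{yx} = -2x/y^2$, and $\phi_{yy} = 2x^2/y^3$. Assembling these into the Hessian
\begin{align}
H(x,y) = \begin{bmatrix} 2/y & -2x/y^2 \\ -2x/y^2 & 2x^2/y^3 \end{bmatrix}
\end{align}
is purely mechanical.

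The key step is to exhibit $H$ as manifestly positive semidefinite rather than fiddling with principal minors. For any direction $(u,v)\in\R^2$, expanding the quadratic form gives
\begin{align}
\begin{bmatrix} u & v\end{bmatrix} H(x,y) \begin{bmatrix} u \\ v\end{bmatrix} = \frac{2u^2}{y} - \frac{4xuv}{y^2} + \frac{2x^2v^2}{y^3} = \frac{2}{y^3}\bigl(uy - xv\bigr)^2,
\end{align}
which is nonnegative since $y>0$. This factorization is the whole trick and is the step I would be most careful with, because it is the cleanest way to sidestep a case analysis on the sign of $x$.

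Finally, I would invoke the standard criterion: a twice continuously differentiable function on a convex open set whose Hessian is positive semidefinite everywhere is convex. Since $\{(x,y):y>0\}$ is convex and open, and $H$ has been shown to be positive semidefinite pointwise, we conclude that $\phi$ is convex on its stated domain. No obstacle beyond bookkeeping is anticipated; the only subtlety is noting that the determinant of $H$ vanishes identically, so $H$ is only positive semidefinite (not positive definite), which is nevertheless sufficient for convexity.
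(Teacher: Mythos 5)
Your proof is correct and follows essentially the same route as the paper: both verify convexity via the second-order condition by showing the Hessian of $\phi$ is positive semidefinite on the convex domain $\{(x,y): y>0\}$. The only difference is the certificate of positive semidefiniteness --- you factor the quadratic form as $\frac{2}{y^3}(uy-xv)^2 \geq 0$, while the paper computes the eigenvalues $\lambda_1 = 0$ and $\lambda_2 = 2(x^2+y^2)y^{-3}$ and observes both are nonnegative; these are interchangeable and equally valid.
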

\begin{proof}
It is sufficient for the Hessian of $\phi$ to be positive semi-definite (psd). The eigenvalues of the Hessian can be found in closed form with symbolic algebra:
\begin{align}
    \lambda_1 = 0\quad \lambda_2 = 2(x^2 + y^2)y^{-3}
\end{align}
Since $y > 0$ on the domain of $\phi$, both eigenvalues are non-negative, so the Hessian is psd on the domain of $\phi$.
\end{proof}
\subsection{Proof of Theorem \ref{thm:cheb_mms}}
\begin{proof}
We begin by showing the inequality (\ref{eq:decompose_cheb}). The probability of constraint violation can be written as the expectation of the indicator function. Define the indicator function:
\begin{align}
    \mathds{1}_{(-\infty, 0]}(x) = \begin{cases}
    1 \quad x\in (-\infty, 0]\\
    0 \quad o.w.
    \end{cases}\label{eq:indicator}
\end{align}
The probability can then be expressed as:
\begin{align}
    \Prob(g(\mathbf{w})\leq 0) &= \mathbb{E}[\mathds{1}_{(-\infty, 0]}(g(\mathbf{w}))]
\end{align}
The indicator function (\ref{eq:indicator}) is measurable \cite{hunter2011measure}, so we can apply Proposition \ref{prop:decompose_exp_g}:
\begin{align}
    \mathbb{E}[\mathds{1}_{(-\infty, 0]}(g(\mathbf{w}))]
    &= \sum_{i=1}^n w_i\Prob(g(\mathbf{w}_i)\leq 0)\\
    &\leq \sum_{i=1}^n w_i\text{Conc}[g(\mathbf{w}_i)]
\end{align}
Now, we show the inequality (\ref{eq:mm_thm_leq}) for the Cantelli and VP cases. Since VP is just a constant scaling of Cantelli, the results for Cantelli apply directly to VP. We rewrite:
\begin{align}
    \frac{\text{Var}[g(\mathbf{w})]}{\text{Var}[g(\mathbf{w})] + \mathbb{E}[g(\mathbf{w})} &= 1 - \frac{\mu_{g(\mathbf{w})}}{\mathbb{E}[g(\mathbf{w})^2]}\\
    &= 1 - \phi\left(\mu_{g(\mathbf{w})}, \mathbb{E}[g(\mathbf{w})^2]\right)\label{eq:phi}
\end{align}
By Lemma \ref{lemma:convex_func}, $\phi$ above is convex since $\mathbb{E}[g(\mathbf{w})^2] > 0$ \footnote{Technically, it is possible for the second moment to be zero when the random variable is zero with probability one, but this is a pathological case that should never be encountered in practice.}. By decomposing the moments we have:
\begin{align}
    \begin{bmatrix}
    \mu_{g(\mathbf{w})}\\
    \mathbb{E}[g(\mathbf{w})^2]
    \end{bmatrix} = \sum_{i=1}^n w_i\begin{bmatrix}
    \mu_{g(\mathbf{w}_i)}\\
    \mathbb{E}[g(\mathbf{w}_i)^2]
    \end{bmatrix}
\end{align}
Thus, by the finite version of Jensen's inequality:
\begin{align}\label{eq:jensen}
  \phi\left(\mu_{g(\mathbf{w})}, \mathbb{E}[g(\mathbf{w})^2]\right) \leq \sum_{i=1}^n w_i\phi(\mu_{g(\mathbf{w}_i)}, \mathbb{E}[g(\mathbf{w}_i)^2])
\end{align}
Subtracting the left hand side quantity from $1$ and the right hand side quantity from $\sum_{i=1}^n w_i = 1$, we have:
\begin{align}
    \text{Conc}[g(\mathbf{w})]&\geq \sum_{i=1}^n w_i(1-\phi(\mu_{g(\mathbf{w}_i)}, \mathbb{E}[g(\mathbf{w_i})^2]))\\
    &= \sum_{i=1}^n w_i\text{Conc}[g(\mathbf{w}_i)]
\end{align}
The exact same argument can be applied to the Gauss inequality by establishing convexity of:
\begin{align}
    \text{Var}[g(\mathbf{w}_i)]/\mu_{g(\mathbf{w}_i)}^2
\end{align}
on the domain with $\text{Var}[g(\mathbf{w}_i)] > 0$. We now turn our attention to establishing almost sure strictness of the inequality. It will be sufficient to show that Jensen's inequality is strict. Jensen's inequality is strict if $\phi$ is strictly convex. The key idea is that $\phi$ is indeed strictly convex if we further restrict its domain by removing a set of Lebesgue measure zero. Recall from Lemma \ref{lemma:convex_func} that on the domain $D$ of $\phi$, the eigenvalue $\lambda_2$ is strictly positive, but $\lambda_1=0$. Thus, if we restrict the domain of $\phi$ to not contain the eigenspace of $\lambda_1=0$ for each component, denote this restricted domain $\hat{D}$, then we have that $\phi$ is strictly convex on $\hat{D}$. This set $\hat{D}$ is characterized in terms of moments of components of $g(\mathbf{w})$. The eigenspace for the $i_{th}$ component, $\hat{D}_i$ is:
\begin{align}
    \mathbf{u}_i &:= \left[\frac{\mu_{g(\mathbf{w}_i)}}{\mathbb{E}[g(\mathbf{w}_i)^2]}, 1\right]^T\quad
    \hat{D}_i &:= \{\alpha \mathbf{u}_i : \alpha\in\R\}
\end{align}
We arrive at $\hat{D}$ by removing the union of $\hat{D}_i$:
\begin{align}
    \hat{D}:= D - \cup_{i=1}^n\hat{D}_i
\end{align}
Note that $\cup_{i=1}^n\hat{D}_i$ is the union of lines and, thus, has Lebesgue measure zero. One complication is that $[\mu_{g(\mathbf{w}_i)}, \mathbb{E}[g(\mathbf{w}_i)^2]]^T$ does lie on $\hat{D}_i$, but it will be sufficient for it to not lie on any other $\hat{D}_i$; that is, we require:
\begin{align}
 \forall i\in[n], \quad [\mu_{g(\mathbf{w}_i)}, \mathbb{E}[g(\mathbf{w}_i)^2]]^T\notin \cup_{j\in[n], j\neq i}\hat{D}_j\label{eq:component_restriction}
\end{align}
By requiring the first and second moments of each component to not lie on any $\hat{D}_i$ line corresponding to other components, we require Jensen's inequality to cross the interior of the set on which $\phi$ is strictly convex. By the analytic statement of Jensen's Inequality under strict convexity \cite{jensen_analysis}, we thus have that under the conditions $w_i>0$ and $[\mu_{g(\mathbf{w}_i)}, \mathbb{E}[g(\mathbf{w}_i)^2]]^T$ are unique $\forall i\in[n]$:
\begin{align}
  \phi\left(\mu_{g(\mathbf{w})}, \mathbb{E}[g(\mathbf{w})^2]\right) < \sum_{i=1}^n w_i\phi(\mu_{g(\mathbf{w}_i)}, \mathbb{E}[g(\mathbf{w}_i)^2])
\end{align}
Since the set $\cup_{i=1}^n\hat{D}_i$ has measure zero, if we choose first and second moment pairs at random according to any distribution supported on some subset of $\R^2$ with non-zero Lebesgue measure, the probability of the components not satisfying the condition $(\ref{eq:component_restriction})$ is zero.
\end{proof}
\newpage
\subsection{Cost Parameters}

\begin{table}[!h]
\centering
\begin{tabular}{|l|l|l|}
\hline
\textbf{Parameter} & \textbf{Description}       & \textbf{Value}                                                                                    \\ \hline
$c_D$              & Contouring deviation cost  & 20                                                                                                \\ \hline
$c_L$              & Contouring lag cost        & 1                                                                                                 \\ \hline
$c_v$              & Speed deviation cost       & 1                                                                                                 \\ \hline
$R$                & Control effort cost matrix & $\begin{bmatrix}

1 & 0.0\\

0.0 & 100

\end{bmatrix}$\\ \hline
\end{tabular}
\vspace{1.5mm}
\caption{Table of Cost Parameters}
\end{table}
\bibliographystyle{IEEEtran}
\bibliography{references}

\end{document}